\newtheorem{definition}{Definition}
\newtheorem{theorem}{Theorem}
\newtheorem{remark}{Remark}
\title{Interpretable Clustering via Multi-Polytope Machines}
 \author {
     Connor Lawless\textsuperscript{\rm 1}\thanks{Work done while an intern at IBM Research.},
     Jayant Kalagnanam\textsuperscript{\rm 2},
     Lam M. Nguyen\textsuperscript{\rm 2},
     Dzung Phan\textsuperscript{\rm 2},
     Chandra Reddy\textsuperscript{\rm 2}
 }
\begin{document}

\maketitle

\begin{abstract}
Clustering is a popular unsupervised learning tool often used to discover groups within a larger population such as customer segments, or patient subtypes. However, despite its use as a tool for subgroup discovery and description - few state-of-the-art algorithms provide any rationale or description behind the clusters found. We propose a novel approach for interpretable clustering that both clusters data points and constructs polytopes around the discovered clusters to explain them. Our framework allows for additional constraints on the polytopes - including ensuring that the hyperplanes constructing the polytope are axis-parallel or sparse with integer coefficients. We formulate the problem of constructing clusters via polytopes as a Mixed-Integer Non-Linear Program (MINLP). To solve our formulation we propose a two phase approach where we first initialize clusters and polytopes using alternating minimization, and then use coordinate descent to boost clustering performance. We benchmark our approach on a suite of synthetic and real world clustering problems, where our algorithm outperforms state of the art interpretable and non-interpretable clustering algorithms.\end{abstract}

\noindent 
\section{Introduction}

Clustering is an unsupervised machine learning problem that aims to partition unlabelled data into groups. In practice, it is often used as a tool for discovering sub-populations within a dataset such as customer segments, disease heterogeneity, or movie genres. In these applications the group assignment itself is often of secondary importance to the interpretation of the groups found. However, traditional clustering algorithms simply output a set of cluster assignments and provide no explanation or interpretation for the discovered groups. This fundamental misalignment between algorithms and applications force practitioners to work backwards from cluster assignments to fit post-hoc explanations to the output of traditional clustering algorithms. Motivated by these shortcomings, the field of interpretable clustering aims to jointly cluster points and provide an explanation of the groups themselves. 

Recent work on interpretable clustering has focused on leveraging decision trees, a popular interpretable supervised learning tool, to construct clusters \cite{ghattas2017clustering, pmlr-v119-moshkovitz20a, frost2020exkmc, bertsimas2017optimal, fraiman2013interpretable, basak2005interpretable, jin2001cell, de1997using, yasami2010novel}. While these approaches have intuitive appeal, as decision trees are a well studied model class that are easy to understand, they focus primarily on clusters that can be defined by axis-parallel partitions of the feature space. In contrast, interpretable supervised learning has a much wider range of tools available to practitioners including sparse linear models \cite{tibshirani1996regression, ustun2013supersparse, bertsimas2020sparse}, rule sets \cite{dash2018boolean, wang2015learning, wang2017bayesian, lawless2021fair}, or score cards \cite{ustun2017optimized} amongst others. In this paper we introduce a novel method for interpretable clustering that provides cluster explanations by constructing polytopes around each cluster. Our framework allows for constraints on the hyperplanes that construct each polytope allowing for a wider range of cluster explanations including axis parallel partitions (similar to decision trees), partitions defined by sparse integer hyperplanes (similar to sparse/integer models), and general linear models (similar to SVMs). Figure \ref{fig:mpc_ex} shows an example of a polytope cluster explanation under the three different constraints. Overall our approach provides practitioners more flexibility to explain clusters inline with the business requirements of their application.

\begin{figure*}[!htb]
    \centering
      \includegraphics[width=1\textwidth]{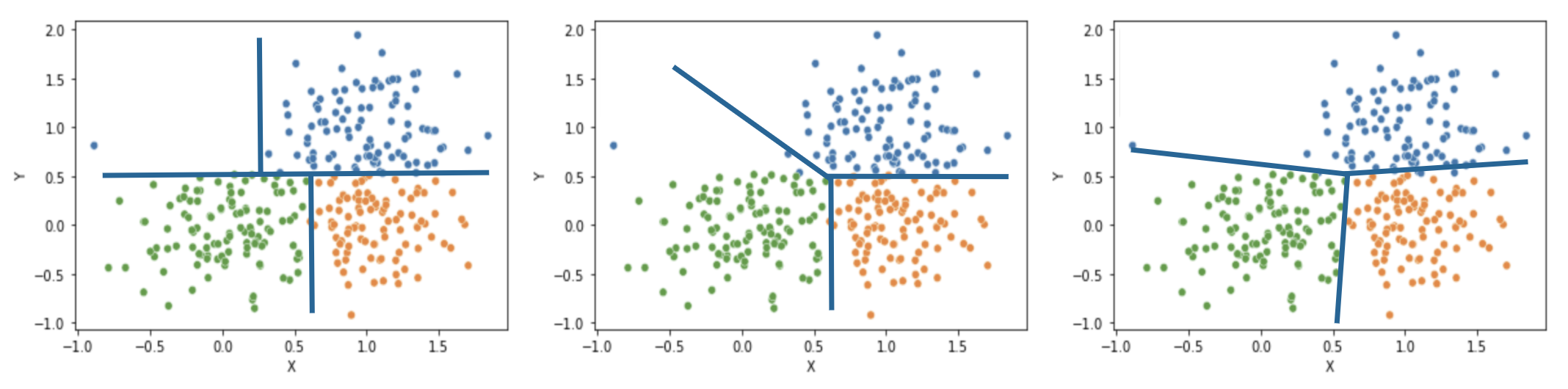}
  \caption {\label{fig:mpc_ex} Sample clusters explained by polytopes with different constraints on the hyperplanes. (Left) Polytopes are composed of axis-parallel hyperplanes giving rise to rectangular cluster explanations. (Center) Polytopes are composed with integral hyperplanes allowing only diagonal or axis parallel lines. (Right) Polytopes composed with general hyperplanes. }
\end{figure*}

\subsection{Related Work}
Existing interpretable clustering methods can be grouped into two general approaches: post-hoc explanations and integrated interpretation and clustering. Post-hoc approaches take the output of any clustering algorithm and attempt to fit an explanation to it. A common heuristic approach is to simply use an interpretable supervised learning algorithm to predict the cluster labels \cite{jain1999data, de2017explaining, kauffmann2019clustering}. Recent work has also looked at designing specialized decision tree algorithms to explain the output of a clustering algorithm using a new splitting criterion \cite{pmlr-v119-moshkovitz20a, frost2020exkmc}. Another post-hoc approach is to find a representative summary point, or prototype, for each cluster. Common choices include looking at summary statistics for each feature such as the mean, median or mode. Carrisoza et al. present an integer programming (IP) formulation for finding an optimal prototype and radius around the point that captures the cluster, trading off false positives and false negatives \cite{carrizosainterpreting}. While these post-hoc approaches work with any clustering algorithm, they fail to incorporate the down-stream interpretation task in the generation of the clusters themselves. 

In response to the shortcomings of post-hoc approaches, recent work has focused on integrated approaches that perform clustering with the interpretation task in mind. Liu et al. transform clustering into a supervised learning problem by augmenting the dataset with synthetic data points and use a decision tree algorithm to classify the original points from the synthetic, resulting in a decision tree with leaf nodes representing clusters \cite{liu2000}. Researchers have also adapted existing heuristic decision tree approaches to perform clustering by using new splitting criterion \cite{fraiman2013interpretable}. Most similar to our work, Bertsimas et al. formulate the problem of finding an optimal decision tree to perform clustering as a mixed integer optimization problem and construct an approximate solution via coordinate descent \cite{bertsimas2021interpretable}. In a similar vein, recent work has looked at building rule sets to explain clusters \cite{chen2016interpretable, chen2018interpretable, carrizosa2021clustering, pelleg2001mixtures}. In contrast to the existing state of the art, our approach is the first to look at a more general function class to explain clusters. Our approach has more expressive power than decision-tree or rectangular approaches as polytopes with axis-parallel hyperplanes can be mapped to a decision tree. We also provide more flexibility to practitioners, allowing them to trade off the relative importance of interpretability and cluster quality.

\subsection{Main Contributions}
We summarize our main contributions as follows:
\begin{itemize}
    \item We propose a novel mixed integer optimization non-linear programming (MINLP) formulation for interpretable clustering that jointly clusters points and constructs polytopes surrounding each cluster.
    \item As a component of our MINLP framework we introduce a representation aware $k$-means clustering formulation that forms clusters with interpretability integrated into the objective.
    \item We present a formulation to find separating hyperplanes with sparse integer coefficients for interpretability.
    \item To approximate the solution of the MINLP formulation we introduce a two-stage optimization procedure that initializes clusters via alternating minimization and then optimizes the Silhouette coefficient via coordinate descent.
    \item Numerical experiments on both synthetic and real-world datasets show our approach outperforms state of the art interpretable and uninterpretable clustering algorithms.
\end{itemize}

The remainder of this paper is organized as follows. Section \ref{sec:mio} details the MINLP formulation for polytope clustering. Section \ref{sec:algo} gives an overview of our two-stage optimization procedure to approximate the optimal solution to our formulation. Finally, we present a suite of numerical experiments on synthetic and real world datasets in Section \ref{sec:numerics}. 

\section{Mixed Integer Optimization Framework} \label{sec:mio}

In the standard clustering setting we are given a set of unlabelled data points ${\cal D} = \{x^t \in \mathbb{R}^D \}_{t=1}^{N}$ and asked to partition them into a set of $K$ clusters $C_1, \dots, C_K$, where  $C_i$ is the set of points belonging to cluster $i$. Note that we assume the data to have real-valued features and have an associated metric for defining pairwise distance between points. This is not a restrictive assumption as categorical data can be converted to real valued features via a standard one-hot encoding scheme. In the interpretable clustering setting we also have to provide an explanation of each cluster.

The key idea of our approach is to explain each cluster by constructing a polytope around it. To construct such a polytope we find a separating hyperplane for each pair of clusters. The intersection of the half-spaces generated by the set of hyperplanes involving each cluster then defines the polytope. In this section we present a mixed integer optimization formulation for jointly finding clusters and defining polytopes. We start by considering the interpretation and clustering problems separately before joining them in a unified framework.   

\subsection{Interpretable Separating Hyperplanes} \label{sec:sep_hyp}

Consider the setting where we have a fixed set of cluster assignments and need to construct a hyperplane to separate a pair of clusters $C_i$ and $C_j$. 
To construct a polytope to describe cluster $C_i$ we would need to construct one such hyperplane between $C_i$ and every other cluster. This problem bares a striking resemblance to the classic support vector machine (SVM) problem \cite{boser1992training}, where the goal is to find a hyperplane that separates two classes of data with a maximum margin. However, the standard SVM approach adds no additional constraints on the resulting hyperplane - potentially leading to dense hyperplanes with decimal values that are difficult for end users to interpret. We instead use a novel IP formulation that constructs a separating hyperplane with small integer coefficients and a limit on the number of non-zero coefficients.

Let $w^{ij}$ and $b^{ij}$ be the slope and intercept of the separating hyperplane between clusters $i$ and $j$. Furthermore let $w^{ij}_{d,+}$ and $w^{ij}_{d,-}$ represent non-negative components of element $d$ in $w$ - namely $w^{ij}_d = w^{ij}_{d,+} - w^{ij}_{d,-}$. We also introduce a constant $M$ that represents the maximum allowable integer coefficient value. To add constraints on the sparsity of the hyperplane we introduce binary variables $y^{ij}_{d,+}$ and $y^{ij}_{d,-}$ that track whether feature $d$ is included in the hyperplane. We put a hard constraint of $\beta$ on the number of non-zero coefficients in the final hyperplane. $\xi^{ij}_t$ tracks the mis-classification of data point $t$ - specifically its distance from the correct side of the hyperplane to be classified correctly.  Finally, $\epsilon$ is a fixed constant for the minimum non-zero separation distance with respect to any feasible hyperplane (i.e., the smallest non-zero distance between two points with respect to feasible hyperplane). With this notation in mind, the IP formulation for constructing an interpretable separating hyperplane is the following:

    \begin{align}
    &\min_{w, b, \xi} &\sum_{t \in C_i \cup C_j} (\xi^{ij})_t  \label{obj:sep_hyp}\\
    &\textbf{s.t.}~~ &(w^{ij})^T x^t + b^{ij} &\geq -(\xi^{ij})_t, \ \forall x^t \in C_i \label{const:w_ci}\\
    &&(w^{ij})^T x^t + b^{ij} + \epsilon &\leq (\xi^{ij})_t, \ \forall x^t \in C_j \label{const:w_cj} \\
    && w^{ij}_{d,+} - w^{ij}_{d,-} &=  w^{ij}_d ~~~ \forall d \in [D] \label{const:w_def} \\
    && \sum_{d \in [D]} w^{ij}_{d,+} + w^{ij}_{d,-} &\geq 1 ~~\label{const:w_trivial}\\
    && y^{ij}_{d,+} + y^{ij}_{d,-} &\leq 1 ~~~\forall d \in [D] \label{const:y_trival}\\
    &&  \sum_{d \in [D]} y^{ij}_{d,+} + y^{ij}_{d,-}&\leq \beta  \label{const:sparsity}\\
    && 0 \leq w^{ij}_{d,+} &\leq My^{ij}_{d,+} ~~~\forall d \in [D] \label{const:M+}\\
    && 0 \leq w^{ij}_{d,-} &\leq My^{ij}_{d,-} ~~~\forall d \in [D]\label{const:M-}\\
    &&w^{ij} \in {\cal Z}^d,& w^{ij}_{+}, w^{ij}_{-} \in {\cal Z}_{\geq 0}^d \\
    && &(\xi^{ij})_t \geq 0 \\
    && &y^{ij}_d \in \{0,1\}. \label{const:y_bin}
    \end{align}

The objective (\ref{obj:sep_hyp}) is simply to minimize the classification error of the separating hyperplane. Constraints (\ref{const:w_ci}) and (\ref{const:w_cj}) track whether we are classifying data point $x^t$ correctly. Note that we add $\epsilon$ to constraint (\ref{const:w_cj}) to ensure that the separating hyperplane is only inclusive of cluster $C_i$. In other words, a data point in cluster $C_j$ is only classified correctly if it lies strictly below the hyperplane. Constraints (\ref{const:w_trivial}) and (\ref{const:y_trival}) ensure that the trivial hyperplane (all zero) is excluded by ensuring that the $\ell_1$ norm of the hyperplane is above 1 (the smallest possible $\ell_1$ norm of an integer hyperplane) and that only at most one of $w^{ij}_{d,+}$ and $w^{ij}_{d,-}$ are non-zero. Constraint (\ref{const:sparsity}) bounds the number of non-zero coefficients. Finally constraints (\ref{const:M+}) and (\ref{const:M-}) constrain the maximum integer values of the coefficients. One interpretation of the constant $M$ is that it controls the search space of possible hyperplanes. Start by noting that any general hyperplane can be normalized to have coefficients between $-1$ and $1$ by dividing by the largest coefficient. For integer hyperplanes with maximum value $M$ normalizing by $M$ gives possible coefficient values $\frac{n}{M}$ where $n$ is an integer between $-M$ and $M$. Thus increasing $M$ grows the number of feasible hyperplanes.

Note that when solving this problem we only consider data points in $C_i$ and $C_j$, which in settings with a large number of clusters can be substantially less than the full data set. This allows our IP formulation to scale to larger datasets while limiting the computational burden of solving each individual IP. 


\subsection{Silhouette Clustering with Cardinality Constraints} \label{sec:sil_cluster}

 Now consider the problem of finding cluster assignments. High quality clusters are generally defined by having low \emph{intra}-cluster distance (i.e. the distance between points in the same cluster), and high \emph{inter}-cluster distance (i.e. distance between points in different clusters). There are a number of cluster quality metrics that incorporate this high-level concept, one of the most popular being silhouette coefficient. The silhouette coefficient uses the average distance between a data point $t$ and all other data in the same cluster as a measure of intra-cluster distance, and the average distance between $t$ and every point in the second closest cluster as a measure of inter-cluster distance. 

\begin{definition}[Silhouette Coefficient] Consider data point $t$ with cluster label k. Let $r(t)$ be the average distance between data point $t$ and every other point in the same cluster. Let $q(t)$ be the average distance between data point $t$ and every point in the second closest cluster. The silhouette score for data point $t$ is defined as:
\begin{align*}
r(t) = \frac{1}{|C_k|-1}\sum_{j \in C_k}d_{tj} \\
q(t) = \min_{l = 1, \dots, K: l \neq k} \frac{1}{|C_l|}\sum_{j \in C_l}d_{tj} \\
s(t) = \frac{q(t) - r(t)}{\max(q(t),r(t))}
\end{align*}

The silhouette score for a set of cluster assignments is the average of the silhouette scores for all the data points. The possible values range from -1 (worst) to +1 (best).
\end{definition}


Similar to \cite{bertsimas2021interpretable} we now formulate the silhouette clustering problem as a MINLP. Let $z_{tk}$ be the binary variable indicating whether data point $t$ is assigned to cluster $k$, and the variables $u_k$ be the binary variable indicating whether cluster $k$ is used. To track the silhouette coefficient, let $s_t$ be the silhouette score for data point $t$, $q_t$ represent the intercluster distance measure $q(t)$, and $r_t$ represent the intracluster distance measure $r(t)$. Let $c_{tk}$ track the distance from data point $t$ to cluster $k$, and $\gamma_{tk}$ be the binary variable indicating the second closest cluster for data point $t$. Using this notation, the formulation for finding the cluster assignments is:

\begin{align}
    &\min_{z, c_k, u_k} & -\frac{1}{n}&\sum_{t \in {\cal D}} s_t \label{obj:cluster_qual} \\
    &\textbf{s.t.} &\sum_{k = 1}^{K} z_{tk} &= 1 ~\forall t\in \mathcal{D} \label{const:assignment}\\
        &&N_{min}u_k \leq \sum_{t\in\mathcal{D}}z_{tk} &\leq N_{max}u_k, ~\forall k = 1, \dots, K  \label{const:cardinality}\\
    && s_t &= \frac{q_t - r_t}{m_t} ~~ \forall t \in {\cal D} \label{const:sil1} \\
    && c_{tk} &= \frac{1}{N_k} \sum_{j \in {\cal D}} d_{ij}z_{jk}, ~ \forall t, k \\
    && N_k &= \sum_{t \in {\cal D}} z_{tk} ~~ \forall k \in [K]\\
    && r_t &= \sum_k c_{tk}z_{tk} ~~ \forall t \in {\cal D}\\
    && q_t &\geq \sum_k \gamma_{tk} c_{tk} ~~ \forall t \in {\cal D}\\
    && \sum_{k} \gamma_{tk} &= 1 ~~ \forall t \in {\cal D} \\
    && \gamma_{tk} &\leq 1 - z_{tk}  ~~ \forall t \in {\cal D}\\
    && m_t &\geq r_t, q_t \label{const:sil2}\\
    &&u_k, z_k &\in \{0,1\}.\label{const:int_clustering}
\end{align}

The objective of the formulation is to maximize the silhouette coefficient. Constraint (\ref{const:assignment}) ensures that we assign each data point to at most one cluster. 
Constraint (\ref{const:cardinality}) sets cardinality constraints (i.e., minimum and maximum values: $N_{min}$ and $N_{max}$) on the size of the clusters. Finally, constraints (\ref{const:sil1})-(\ref{const:sil2}) track the silhouette coefficient for each data point $t$. Note that this formulation is a mixed integer non-linear program with non-linearity introduced by the silhouette coefficient.

\subsection{Joint Optimization Framework}

The goal of the joint framework for clustering and polytope construction is to both maximize cluster quality, defined by the silhouette coefficient, and minimize the representation error. To control the relative importance given to representation error and cluster quality we also introduce a regularization parameter $\lambda$. We define the representation error as the sum of the mis-classification costs for all the hyperplanes defining each cluster's polytope. Given a hyperplane between two clusters $C_i, C_j$, we can determine the hypothetical mis-classification error for every data point (including those not in $C_i$ and $C_j$) if they were assigned to $C_i$ or $C_j$ by computing the distance from the point to the hyperplane. Let $(\xi_+^{ij})_t$ and $(\xi_-^{ij})_t$  be the error for data point $t$ if it is assigned to cluster $i$ and $j$ respectively. Let $ {\cal K} \times {\cal K}$ be the set of all pairs of clusters. Combining the hyperplane and clustering formulations, the joint framework for constructing polytope clusters is:  
\begin{align}
    &\min_{z, c_k, u_k} -\frac{1}{n}\sum_{t \in {\cal D}} s_t~~+ \label{obj:sil_error}\\
    &\lambda\sum_{x^t \in \mathcal{D}} \sum_{i=1}^{K-1}\sum_{j=i+1}^{K} (z_{tk}(\xi_+^{ij})_t+z_{tj}(\xi_{-}^{ij})_t) \label{obj:rep_error}\\
    &\textbf{s.t. (\ref{const:assignment})-(\ref{const:int_clustering})}\\
    &\quad~~\textbf{(\ref{const:w_def})-(\ref{const:y_bin}) for $i,j  \in  {\cal K} \times {\cal K}$ }\\
    &(w^{ij})^T x^t + b^{ij} \geq -(\xi^{ij}_+)_t ~~\forall i,j  \in  {\cal K} \times {\cal K}, t \in {\cal D} \label{const:w_ci_z}\\
    &(w^{ij})^T x^t + b^{ij} + \epsilon \leq (\xi^{ij}_{-})_t ~~\forall i,j  \in  {\cal K} \times {\cal K}, t \in {\cal D}  \label{const:w_cj_z} 
\end{align}
Note that the objective now trades off cluster quality, i.e., Eq. (\ref{obj:sil_error}), and representation error, i.e., Eq. (\ref{obj:rep_error}). We also need one separating hyperplane problem per pair of clusters.

\begin{remark} \label{rem:features} There is no requirement that the feature space for clustering and hyperplane separation be the same. For instance real valued features can be used to compute the cluster quality, but the separation is done in a binarized feature space to ensure more interpretable explanations.
\end{remark}

\section{Algorithm Overview} \label{sec:algo}
The joint formulation for polytope clustering is a MILNP, and thus is difficult to optimize globally. Instead, we use a two-stage procedure to find a high quality approximation of the solution. In the first stage we use alternating minimization to find an initial set of clusters and separating hyperplanes. We then use coordinate descent to improve the clustering performance of our assignments and explanation.

\subsection{Initialization via Alternating Minimization}
We decompose the joint framework (\ref{obj:sil_error})-(\ref{const:w_cj_z}) into two components: cluster assignments, and constructing hyperplanes to separate clusters. The key intuition behind our initialization procedure is we alternate between clustering the points into $K$ clusters $C_1, C_2, \dots C_K$ and then constructing interpretable separating hyperplanes between each pair of clusters. The silhouette clustering problem as presented in Section \ref{sec:sil_cluster} remains a difficult problem to solve due to the non-linearity presented by the silhouette metric. For computational tractability, we instead use the $k$-means clustering objective \cite{jain1999data} as a proxy for silhouette coefficient during the initialization phase. This leads to the following much simpler formulation for determining cluster assignments:
 
\begin{align}
    &\min_{z, c_k, u_k} &\hspace{-10mm}\sum_{k=1}^{K} \sum_{x^t \in \mathcal{D}} &z_{tk} \|x^t - c_k\|^2 + \nonumber\\
    &&\hspace{-10mm}\lambda\sum_{x^t \in \mathcal{D}} \sum_{i=1}^{K-1}\sum_{j=i+1}^{K}& (z_{tk}(\xi_+^{ij})_t+z_{tj}(\xi_{-}^{ij})_t) ~~ \label{obj:kmeansrep}\\
    &\textbf{s.t. (\ref{const:assignment})-(\ref{const:cardinality})}  \label{consts:kmeansrep}
\end{align}

Note that this problem is now similar to a traditional $k$-means clustering problem, with an additional objective term to capture representation error. We denote this new clustering formulation the representation aware $k$-means clustering problem. To solve this formulation for a fixed set of representation errors $\xi$ we alternate between fixing the cluster centers $c_k$ and generating assignments by solving the IP, and then fix the assignments and update the cluster centers and repeat the process until convergence. To begin the initialization procedure we solve the cluster assignment with no representation errors (i.e., $\xi_t = 0 ~\forall t$), and then generate polytopes by solving the separating hyperplane problem detailed in Section \ref{sec:sep_hyp} for every pair of clusters. The representation errors generated by the separating hyperplane problems are then used in the representation aware $k$-means problem and the process is repeated until convergence. The entire initialization procedure is outlined in Algorithm \ref{alg:mpc_init}.

The choice of $k$-means as a proxy for silhouette coefficient is motivated both by its relative ease of optimization and the fact that $k$-means generally performs well as a proxy for silhouette coefficient (see experimental results in \cite{bertsimas2021interpretable}). In the absence of interpretability constraints on the separating hyperplanes, all local solutions of the $k$-means clustering problem also have the appealing property that they can be perfectly explained by a polytope. 
\begin{theorem}[$k$-means Polytope Interpretability]
 Local solutions to the $k$-means clustering problem with Euclidean distance can be perfectly separated from the other clusters by a polytope.
\end{theorem}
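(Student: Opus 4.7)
The plan is to exploit the fact that any local minimum of the $k$-means objective (with fixed cluster centers $c_1,\dots,c_K$) is a Voronoi partition of the data, and Voronoi cells are polytopes. Concretely, at a local optimum, reassigning any single point $x^t$ from its current cluster $k$ to any other cluster $l$ cannot decrease the objective. Since the $k$-means objective only couples $x^t$ to its own cluster center through $\|x^t - c_k\|^2$, this immediately gives the pointwise inequality
\[
\|x^t - c_k\|^2 \;\leq\; \|x^t - c_l\|^2 \quad \text{for all } l \neq k.
\]
So the first step of the proof is simply to formalize this local-optimality argument.

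The second step is to linearize each of these quadratic inequalities. Expanding both sides and cancelling the $\|x^t\|^2$ term yields the affine inequality
\[
2(c_l - c_k)^\top x^t \;\leq\; \|c_l\|^2 - \|c_k\|^2,
\]
which defines a closed half-space $H_{kl}$ in $\mathbb{R}^D$ whose boundary is the perpendicular bisector of the segment $c_k c_l$. Thus every point in $C_k$ lies in $H_{kl}$ for every $l \neq k$.

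The third step is to assemble the polytope. Define
\[
P_k \;=\; \bigcap_{l \neq k} H_{kl},
\]
which is a finite intersection of half-spaces and hence a (possibly unbounded) polytope — precisely the Voronoi cell of $c_k$ with respect to $\{c_1,\dots,c_K\}$. By the inequalities above, $C_k \subseteq P_k$. Moreover, for $l \neq k$ the hyperplane bounding $H_{kl}$ strictly separates $c_k$ from $c_l$, so by the analogous inequalities at a local optimum every point in $C_l$ lies on the opposite side, giving $C_l \cap \mathrm{int}(P_k) = \emptyset$. This is exactly the claim that $C_k$ is perfectly separated from the other clusters by a polytope.

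There is no real obstacle here; the result is essentially the observation that Lloyd-type fixed points of $k$-means yield Voronoi partitions. The only mild subtlety worth flagging in the write-up is the handling of ties (points equidistant from two centers), which can be placed in either cluster without affecting local optimality, and the fact that $P_k$ may be unbounded — but the statement of the theorem only requires separation, not boundedness, so this is harmless. One could additionally note that exactly the hyperplanes $H_{kl}$ constructed here are feasible solutions (up to the integrality and sparsity constraints) for the separating hyperplane IP of Section~\ref{sec:sep_hyp}, which is why the theorem is relevant to the initialization scheme.
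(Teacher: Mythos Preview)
Your proposal is correct and is essentially the same argument as the paper's: both use the single-point reassignment local-optimality condition to obtain $\|x^t - c_k\|^2 \le \|x^t - c_l\|^2$, linearize this into the perpendicular-bisector hyperplane between $c_k$ and $c_l$, and then intersect the resulting half-spaces to form the separating polytope (the Voronoi cell). The paper presents the linearization via a contradiction argument with the explicit hyperplane $w = c_j - c_i$, $b = -(c_j - c_i)^\top\tfrac{c_i+c_j}{2}$, while you expand the squares directly, but the constructed hyperplanes and overall logic are identical, including the remark on tie-breaking.
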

Unfortunately this result does not hold with the addition of interpretability constraints. However the following theorem shows that our initialization procedure is still guaranteed to converge to a solution in a finite number of iterations.

\begin{theorem}[Alternating Minimization Improvement]
Algorithm \ref{alg:mpc_init} generates objective values for the representation aware $k$-means clustering problem that are monotonically decreasing for $l \geq 2$, and terminates in a finite number of iterations.
\end{theorem}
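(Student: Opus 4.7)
My plan is to interpret each outer sweep of Algorithm~\ref{alg:mpc_init} as a block-coordinate descent on the representation-aware $k$-means objective $F(z, c, \xi)$ of (\ref{obj:kmeansrep})--(\ref{consts:kmeansrep}), show that every block update weakly decreases $F$, and then argue finite termination from finiteness of the reachable state space. Let $(z^{(l)}, c^{(l)}, \xi^{(l)})$ denote the iterate at outer iteration $l$. The restriction to $l \geq 2$ reflects the fact that at $l=1$ the procedure determines $(z^{(1)}, c^{(1)})$ using $\xi \equiv 0$ before any separating hyperplane has been built, and $\xi = 0$ is not attainable by any feasible hyperplane (constraint~(\ref{const:w_trivial}) excludes the zero hyperplane); from $l=2$ onward the $\xi^{(l-1)}$ plugged into the clustering subproblem is actually produced by the SVM IPs, so the block updates are honestly minimizing $F$ over the two blocks.

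For the monotonicity step I would split each iteration $l \geq 2$ into two sub-updates. First, with $\xi^{(l-1)}$ fixed, the inner alternating loop runs Lloyd's algorithm on $F(\cdot, \cdot, \xi^{(l-1)})$ augmented by a linear-in-$z$ term: the IP update of $z$ given $c$ is exactly optimal in $z$, while the centroid update of $c$ given $z$ optimally minimizes the Euclidean-distance term and leaves the representation-error term untouched. Hence $F(z^{(l)}, c^{(l)}, \xi^{(l-1)}) \leq F(z^{(l-1)}, c^{(l-1)}, \xi^{(l-1)})$. Second, with $(z^{(l)}, c^{(l)})$ fixed, re-solving the hyperplane IP of Section~\ref{sec:sep_hyp} for every pair $(i,j)$ decouples cleanly because the representation-error term in (\ref{obj:rep_error}) weights $(\xi_+^{ij})_t$ only on points with $z_{ti}=1$ and $(\xi_-^{ij})_t$ only on points with $z_{tj}=1$; the objective of the pairwise IP is therefore exactly the $(i,j)$-contribution to the representation-error term. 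Each pairwise IP is thus an exact minimization of its block, yielding $F(z^{(l)}, c^{(l)}, \xi^{(l)}) \leq F(z^{(l)}, c^{(l)}, \xi^{(l-1)})$. Chaining the two inequalities gives the claimed monotonic decrease.

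For finite termination, note that $z \in \{0,1\}^{N \times K}$ is finite, the integrality and box constraints (\ref{const:M+})--(\ref{const:M-}) restrict each $w^{ij}$ to a finite integer box, and for each $z$ the centroids and the optimal hyperplane parameters and $\xi$ (modulo tie-breaking of the IP solver) are uniquely determined. Thus the iterates take values in a finite state space. Since $F$ is non-increasing and takes only finitely many distinct values on this state space, the number of strict decreases is bounded; once a step produces no strict decrease, both block updates are at an optimum and the algorithm halts at a fixed point.

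The main obstacle, I expect, is the decoupling argument in the second sub-update: I need to justify explicitly that running the pair-by-pair IPs of Section~\ref{sec:sep_hyp} (each of which only examines points in $C_i \cup C_j$) actually minimizes the global representation-error term, which sums over all $t \in \mathcal{D}$. The key observation that makes this clean is that the $z$-multipliers in (\ref{obj:rep_error}) zero out contributions from points outside $C_i \cup C_j$, but the step must be written carefully to avoid a sign error or a misallocation between $\xi_+^{ij}$ and $\xi_-^{ij}$. A minor technicality is that $b^{ij}$ is continuous in the IP; at optimality it takes one of finitely many data-determined values for each $w^{ij}$, so the reachable state space remains finite.
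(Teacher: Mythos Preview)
Your proposal is correct and follows essentially the same block-coordinate-descent approach as the paper's own proof: show that the inner Lloyd-type loop weakly decreases the objective for fixed $\xi$, then show that the pairwise hyperplane IPs weakly decrease it for fixed $(z,c)$ via the decoupling observation, and conclude finite termination from finiteness of the assignment/hyperplane state space. Your treatment is in fact slightly more careful than the paper's in two places---you make explicit why the restriction $l\geq 2$ is needed, and you address the continuity of $b^{ij}$ in the finiteness argument, which the paper elides by appealing only to ``integral separating hyperplanes.''
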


\begin{algorithm}[tb]
\caption{Cluster Initialization via Alternating Minimization}
\label{alg:mpc_init}
\textbf{Inputs}: Data ${\cal D}$, initial number of clusters $K$, $\lambda \geq 0$, $M \in {\mathbb Z} > 1$, $\beta \in {\mathbb Z} > 1, \epsilon > 0$ \\
\textbf{Output}: Cluster assignments $z \in \{0,1\}^{n \times K}$, separating hyperplanes $w, b$ 
\begin{algorithmic}[1] 
\STATE Set $(\xi_+^{ij})_t = (\xi_-^{ij})_t = 0 ~ \forall i,j, t$
\STATE Initialize $z_{tk}, c_k$ using conventional $k$-means algorithm on ${\cal D}$
\FOR{$l=1,2,\dots$}
\STATE /* Compute Cluster Assignments */
\FOR{$m=1,2,\dots$}
\STATE Fix $c_k$. Solve (\ref{obj:kmeansrep})-(\ref{consts:kmeansrep}) for updated  $z_{tk}$
\STATE Set $c_k = \frac{1}{N_k} \sum_{t \in {\cal D}} z_{tk} x^t$, $N_k = \sum_{t \in {\cal D}} z_{tk}$.
\ENDFOR
\STATE Set $C_i = \{x^t \in {\cal D} : z_{ti} = 1\} ~ \forall i=1,\dots, K$
\STATE /* Compute Separating Hyperplanes */
\FOR{i=1,\dots,K-1}
\FOR{j=i+1,\dots,K}
\STATE Solve (\ref{obj:sep_hyp})-(\ref{const:y_bin}) with $M, \beta, C_i, C_j$ for $w^{ij}, b^{ij}$
\STATE Set $(\xi^{ij}_+)_t = \max(-w^{ij}x^t + b,0) ~ \forall t \in {\cal D}$ 
\STATE Set $(\xi^{ij}_-)_t = \max(w^{ij}x^t + b + \epsilon, 0) ~ \forall t \in {\cal D}$ 
\ENDFOR
\ENDFOR
\ENDFOR
\STATE \textbf{return} z, w, b
\end{algorithmic}
\end{algorithm}

\subsection{Coordinate Descent}
Once an initial clustering and polytope explanation are in place we use a local search procedure to optimize the original clustering objective. The key idea behind the local search is we consider each polytope and try to adjust it to boost clustering performance. We consider the following local search operations:

\begin{itemize}
    \item \textbf{Boundary Shift}: For a given hyperplane we alter the slope and the intercept of the hyperplane and change cluster assignments based on the new boundary. Intuitively this can be thought of as shifting one boundary of the defining polytope for a cluster. Here $M$ and $\alpha$ restrict the search space of potential hyperplanes to consider, making an exhaustive consideration of potential hyperplanes feasible for small $M$ and $\alpha$.
    \item \textbf{Cluster Splitting}: For a given cluster we attempt to add a new hyperplane to split the cluster into two smaller clusters. Here we consider any feasible separating hyperplane (i.e., in accordance with $M$ and $\alpha$).
    \item \textbf{Cluster Merging}: For two adjacent clusters (i.e., two clusters separated only by one hyperplane), we attempt to remove that hyperplane and merge the clusters. 
\end{itemize}

We consider each local search operation and retain the cluster assignment with the best objective value. One of the properties of this coordinate descent approach is that it can increase or decrease the number of clusters present in the assignment through merging or splitting clusters. This allows our approach to be less sensitive to the initial number of clusters specified during the initialization procedure. To provide a fair comparison to algorithms that have a fixed number of clusters and to support applications where there is a constraint on the number of clusters desired, our coordinate descent procedure also puts an upper bound on the total number of possible clusters that can be generated. The entire clustering algorithm including coordinate descent is outlined in Algorithm \ref{alg:mpc}.

\begin{algorithm}[tb]
\caption{Multi-Polytope Clustering (MPC) Algorithm}
\label{alg:mpc}
\textbf{Input}: Data ${\cal D}$, initial number of clusters $K$, maximum cluster number $K_{\max}$, $\lambda \geq 0$, $M \in {\mathbb Z} > 1$, $\beta \in {\mathbb Z} > 1$ \\\
\textbf{Output}: Cluster assignments $z \in \{0,1\}^{n \times K}$, separating hyperplanes $w, b$ 
\begin{algorithmic}[1] 
\STATE Initialize $z, w, b$ using Algorithm \ref{alg:mpc_init}
\STATE Initialize processing queue ${\cal Q}$ with all hyperplane indices ($(i, j) ~ \forall i = 1, \dots, K-1, j = i+1, \dots, K$) and cluster indices ($i = 1, \dots, K$)
\STATE Compute current loss $\ell$ using (\ref{obj:kmeansrep}) with cluster assignment $z$  
\WHILE{${\cal Q}$  is not empty}
\FOR{$ q \in {\cal Q}$}
\IF{$q$ corresponds to a hyperplane $(i,j)$}
\STATE Find best new hyperplane between $i,j$
\ELSIF{$q$ corresponds to cluster $C_i$} 
\STATE Find best split for cluster $C_i$
\ENDIF
\STATE Compute updated loss $\ell'$  using (\ref{obj:kmeansrep})
\IF{$\ell' < \ell$}
\STATE Update $z$, $\ell = \ell', w, b$ 
\STATE Reset ${\cal Q}$ with all hyperplanes and clusters
\ENDIF 
\ENDFOR 
\ENDWHILE
\STATE \textbf{return} $z, w, b$
\end{algorithmic}
\end{algorithm}

\begin{remark} Our optimization procedure only uses the choice of clustering metric (i.e., silhouette coefficient) during coordinate descent. This means that our framework can be easily extended to other clustering metrics such as Dunn Index.
\end{remark}

\section{Numerical Results} \label{sec:numerics}
To showcase the performance of the MPC algorithm we present two results: (1) clustering performance comparison to state of the art clustering algorithms on synthetic and real world datasets, (2) a view of sample cluster explanations under different settings of our hyperparameters.

\subsection{Clustering Performance}
We ran the MPC algorithm under two sets of hyperparameters to represent different levels of interpretability. The first, MPC-1, sets $M=\beta=1$ and represents cluster explanations with only axis-parallel hyperplanes, providing a fair comparison to univariate decision tree based methods. The second, MPC-2, sets $M=3, \beta=2$ which allows for more general hyperplanes with up to two non-zero integer coefficients and coefficients within $[-3,3]$. To benchmark the performance of MPC we compared it to the following suite of traditional and interpretable clustering algorithms: $k$-means++ \cite{arthur2006k}, Gaussian Mixture Models (GMM) \cite{hastie2009unsupervised}, Hierarchical Clustering (HClust) \cite{hastie2009unsupervised}, Density-based Spatial Clustering of Applications with Noise (DBSCAN) \cite{ester1996density}, Interpretable Clustering via Optimal Decision Trees (ICOT) \cite{bertsimas2021interpretable}, and Explainable $k$-means Clustering (ExKMC) \cite{frost2020exkmc}. For the ExKMC algorithm we present two sets of results: one where we only allow 1 leaf node per cluster (ExKMC-1) and one with up to five leaf nodes per cluster (ExKMC-5). See the supplementary materials for a detailed description of all the benchmark algorithms and the implementation we used.

For all algorithms that require a specification of the number of clusters we tune $k$ between 2 and 10. We ran each algorithm 100 times with different random seeds and report the best result. 
We tested our algorithm on two sets of clustering datasets. The first is a set of 9 synthetic clustering instances called the fundamental clustering and projection suite \cite{ultsch2020fundamental}. The suite contains a range of problem sizes (212-4096 data points) with two or three real valued features. For every dataset we normalize all feature values to be between 0 and 1. A summary of the silhouette score for each algorithm on the synthetic instances is included in Table \ref{tab:fcps}. The MPC-2 algorithm is able to dominate existing clustering methods with respect to the silhouette coefficient, obtaining the best silhouette coefficient on all 9 datasets. Increasing interpretability, namely using MPC-1 instead of MPC-2, comes at a cost to performance with lower scores in a third of the datasets. However, MPC-1 is still able to match or outperform other interpretable algorithms (ICOT, ExKMC-1, ExKMC-5) on all datasets. 

To test the performance of MPC on a more realistic suite of clustering problems we present results for a set of 8 datasets from the UCI machine learning repository \cite{Dua:2019}. While the number of data points remains similar to the synthetic instances (150-4601 data points), these datasets have larger feature spaces (4 to 89 dimensions). These datasets also have a mix of data types including numeric and categorical features. For numeric features we normalized all values to be between 0 and 1, and for categorical features we apply a one hot encoding. A summary of the results on the UCI datasets is included in Table \ref{tab:uci}. Unlike the synthetic datasets, increasing the cardinality of the hyperplanes (i.e. using MPC-2 instead of MPC-1) has no impact on the performance of the algorithm. Both algorithms match or outperform the benchmark algorithms on all but one dataset (Framingham), where ICOT achieves the highest score. Additional experimental results including computation time and the optimal number of clusters for each approach are included in the supplementary materials.

\begin{table*}[h]
\centering\footnotesize
\caption{\label{tab:fcps} Silhouette score on fundamental clustering and projection suite. Asterisk indicates best performing algorithm on each data set.}
\setlength{\tabcolsep}{5pt} 
\begin{tabular}{l c c c c c c c c c c }		\toprule
Dataset & (n,d) & $k$-means++ & GMM & HClust & DBSCAN & ICOT & ExKMC-1 & ExKMC-5 & \textbf{MPC-1} & \textbf{MPC-2} \\\midrule
Atom & (800,2) & 0.615 & 0.613 & 0.601 & 0.525 &0.508 & 0.584 & 0.609& 0.601 & 0.617*\\
Chainlink & (1000,2) & 0.517 & 0.524 & 0.504 & 0.244 & 0.396 & 0.508 & 0.516& 0.518 & 0.519* \\
Engytime & (4096,2) & 0.438 & 0.422 & 0.410 & 0.439 &0.573* & 0.408 & 0.429 & 0.573* & 0.573* \\
Hepta & (212, 3) & 0.702* &0.702* &0.702* & 0.702* & 0.455 & 0.702* & 0.702*& 0.702* & 0.702* \\
Lsun & (400,2) & 0.558 & 0.549 & 0.535 & 0.530 & 0.562 & 0.557 & 0.558 & 0.568*& 0.568*\\
Target & (770,2) & 0.592 & 0.575 & 0.580 & 0.416 & 0.629*& 0.584 & 0.592& 0.629* & 0.629*\\
Tetra & (400,3) &0.506* & 0.506* & 0.495 & 0.506*& 0.504 & 0.506* & 0.506* & 0.506* & 0.506* \\
Two Diamonds& (800,2) & 0.631* & 0.631* & 0.631* & 0.631* & 0.486 & 0.631*& 0.631* & 0.631* & 0.631*\\
Wingnut & (1070,2) & 0.460* & 0.460* & 0.435 & 0.117 & 0.422 & 0.448 & 0.452 & 0.450 & 0.460*\\
\bottomrule
\end{tabular}%
\end{table*}%

\begin{table*}[h]
\centering\footnotesize
\caption{\label{tab:uci} Silhouette score on UCI Machine Learning Clustering problem sets. Asterisk indicates best performing algorithm on each data set.}
\setlength{\tabcolsep}{5pt} 
\begin{tabular}{l c c c c c c c c c c }		\toprule
Dataset & (n,d) &$k$-means++ & GMM & HClust & DBSCAN & ICOT & ExKMC-1 & ExKMC-5 & \textbf{MPC-1} & \textbf{MPC-2} \\\midrule
Iris & (150,4) & 0.629* & 0.629* & 0.629*& 0.629*& 0.629* & 0.629* & 0.629* & 0.629* & 0.629*\\
Wine & (178,12) & 0.381 & 0.386* & 0.386&  0.357 & 0.386* & 0.357 & 0.381 & 0.386* & 0.386*\\
Zoo & (101, 16) &0.409 & 0.395 & 0.416* & 0.405 & 0.416* & 0.396 & 0.409 & 0.416* & 0.416*\\
Seeds & (210, 6) &0.505 & 0.478 & 0.493  & 0.165 &  0.243 & 0.500 &  0.505 & 0.506* & 0.506*\\
Libras & (360, 89) &0.245* & 0.227 & 0.230  & 0.164 & 0.154 & 0.182 & 0.228 & 0.245* & 0.245*\\
Framingham & (3658, 14) &0.405 & 0.371 & 0.380  & 0.189 & 0.454*& 0.403 & 0.405 & 0.406 & 0.406\\
Bank & (4521, 50) & 0.124 & 0.079 & 0.120  & 0.046 & 0.113 & 0.122 & 0.124 & 0.125* & 0.125*\\
Spam & (4601, 56) &0.677 & 0.524 & 0.602  & 0.546 & 0.760* & 0.129 & 0.677 & 0.760* & 0.760*\\
\bottomrule
\end{tabular}%
\end{table*}%

\subsection{Interpretability}
The output of Algorithm \ref{alg:mpc} is a set of cluster assignments and hyperplanes between clusters. To construct an explanation for a given cluster we include all hyperplanes related to the cluster to construct a polytope. We remove redundant hyperplanes (i.e., ones weaker than other constraints already included in the polytope). The flexibility of the MPC framework allows the resulting polytope explanation to resemble a number of different model classes. If we restrict the polytopes to only include axis-parallel hyperplanes (i.e., $M=\beta=1$), then each cluster example is a rule. A sample cluster explanation for the zoo dataset is:
$$
\text{(}\colorbox{lightgray}{HAS HAIR}\text{) AND (}\colorbox{lightgray}{HAS MILK}\text{) AND (}\colorbox{lightgray}{LEGS  $> 0$})
$$

Each hyperplane corresponds to a single clause in the resulting rule. Note that this rule could be equivalent to one leaf node in a decision tree, however the unordered conditions in a rule have been shown in a user study to be easier to interpret than a decision tree \citep{lakkaraju2016interpretable}. 

If we increase $\beta$, each explanation resembles a rule set where each hyperplane constitutes a rule. The zoo datasets contains primarily boolean features meaning that each hyperplane corresponds to a scorecard \cite{ustun2017optimized} where each condition has an associated weight which is compared against a threshold. One cluster explanation for the zoo dataset with $\beta=2, M = 3$ is: 

{\footnotesize
$$
\text{[(\colorbox{lightgray}{DOMESTIC}) + (\colorbox{lightgray}{HAS EGGS}) } > \text{1] AND (\colorbox{lightgray}{HAS TEETH})}
$$}

In this example the first rule in the rule set is a scorecard (both conditions need to be met). The rule sets require more effort to understand than a single rule, but still provide a clear explanation that can be understood by practitioners. For datasets with non-binerized features, the explanations remain rule sets but each rule is a more general linear condition. One cluster explanation for the wine dataset with $\beta=2, M = 3$ is:
{\footnotesize
\begin{align*}
\text{[3*(\colorbox{lightgray}{CITRIC ACIDITY})}&\text{ + 9*(\colorbox{lightgray}{DENSITY})} < \text{ 1.42]} \\
&\text{AND} \\
\text{[1*(\colorbox{lightgray}{pH LEVEL}) + } \text{2*} & \text{(\colorbox{lightgray}{CHLORIDES}) }\geq \text{0.58] }\\
\end{align*}}
Note that while the explanation is no longer easy to understand intuitively, it still remains audit-able which is sufficient for a number of applications. To improve the interpretability of explanations for datasets with real valued features, users can use binarized features for polytope construction (see Remark \ref{rem:features}). 
Another benefit of the MPC framework is that it provides pairwise comparisons between clusters. To compare clusters with a decision tree, users have to traverse a tree looking at different nodes which can be further complicated by tree-based explanations with multiple leaf nodes per cluster. In contrast, the hyperplane separating each pair of clusters acts as a pairwise comparison. A pairwise comparison between two clusters in the zoo dataset (for $M=\beta=1$) is:
$$
\text{IF (\colorbox{lightgray}{HAS HAIR})}: \text{Cluster 3 ELSE Cluster 4} 
$$

\section{Conclusion}
In this paper we introduce a novel algorithm for interpretable clustering that describes clusters using polytopes. We formulate the problem of jointly clustering and explaining clusters as a MINLP that optimizes both silhouette coefficient and representation error. A novel IP formulation for finding separating hyperplanes is used to enforce interpretability considerations on the resulting polytopes. To approximate a solution to the MINLP we leverage a two-phase optimization approach that first generates an initial set of clusters and polytopes using alternating minimization, then improves clustering performance using coordinate descent. Compared to state of the art uninterpretable and interpretable clustering algorithms our approach is able to find high quality clusters while preserving interpretability.

\bibliography{main.bib}


\appendix

\section{Proofs}
\begin{theorem}[K-Means Polytope Interpretability]
 Local solutions to the k-means clustering problem with euclidean distance can be perfectly separated from the other clusters by a polytope.
\end{theorem}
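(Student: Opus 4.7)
The plan is to show that at any local optimum of $k$-means, the induced partition of the feature space is a Voronoi diagram with respect to the cluster centroids, and then observe that Voronoi cells are convex polytopes cut out by finitely many linear inequalities.

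First I would recall the standard first-order characterization of local minima of the $k$-means objective $\sum_k \sum_{t \in C_k} \|x^t - c_k\|^2$: the assignment step and the centroid update step must each be locally optimal. In particular, for any point $x^t$ assigned to cluster $k$, swapping its assignment to another cluster $j$ cannot decrease the objective, which forces $\|x^t - c_k\|^2 \leq \|x^t - c_j\|^2$ for every $j \neq k$. (If this inequality were violated for some $j$, reassigning $x^t$ to $j$ would strictly reduce the objective, contradicting local optimality.) Hence every point in $C_k$ lies in the Voronoi cell $V_k = \{x : \|x - c_k\|^2 \leq \|x - c_j\|^2 \ \forall j \neq k\}$.

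Next I would linearize each defining inequality. Expanding $\|x - c_k\|^2 \leq \|x - c_j\|^2$ cancels the quadratic term in $x$ and yields
\begin{equation*}
2(c_j - c_k)^T x \leq \|c_j\|^2 - \|c_k\|^2,
\end{equation*}
a single affine constraint. Setting $w^{kj} = 2(c_j - c_k)$ and $b^{kj} = \|c_k\|^2 - \|c_j\|^2$, the Voronoi cell $V_k$ is precisely the polytope $\{x : (w^{kj})^T x + b^{kj} \leq 0 \ \forall j \neq k\}$, which is the intersection of $K-1$ half-spaces — exactly the kind of polytope constructed in Section~\ref{sec:mio}. Because each point in $C_k$ lies in $V_k$ and every point in $C_j$ for $j \neq k$ lies in $V_j$ (and thus strictly outside $V_k$ by the same inequalities, up to ties on the boundary), this polytope perfectly separates $C_k$ from the remaining clusters.

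The main obstacle is handling ties, i.e., points equidistant to two centroids, since these lie on the boundary of multiple Voronoi cells. I would address this by noting that any consistent tie-breaking rule in the assignment step still satisfies the weak inequalities above, so the polytope $V_k$ contains all points of $C_k$ (boundary inclusion is fine for the ``perfectly separated'' claim, which only requires that no point of another cluster $C_j$ lies in the interior of $V_k$ beyond its assigned cell). Apart from this bookkeeping, the argument is a direct translation of the well-known fact that $k$-means induces a Voronoi partition, and no deeper machinery is required.
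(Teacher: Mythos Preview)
Your proposal is correct and follows essentially the same approach as the paper: both argue that local optimality forces each point to satisfy the nearest-centroid inequality $\|x - c_k\|^2 \le \|x - c_j\|^2$, linearize this to obtain the perpendicular-bisector hyperplane between each pair of centroids, and then intersect the resulting half-spaces to form the separating polytope, with the same remark about tie-breaking on the boundary. The only cosmetic difference is that you name the construction as the Voronoi diagram, whereas the paper writes down the bisecting hyperplane explicitly and verifies via the identity $w^T x + b = \tfrac{1}{2}(\|x - c_i\|^2 - \|x - c_j\|^2)$; the underlying argument is identical.
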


\begin{proof}
    Let $\mathcal{C} = \{C_1, C_2, \dots, C_k\}$ be a local solution to the k-means clustering problem. We start by proving that any two arbitrary clusters $C_i, C_j$ can be perfect separated by a linear hyperplane. Let $c^{i}, c^{j}$ be the cluster center of $C_i, C_j$ respectively. Local solutions must satisfy $x \in C_i$ $\|x - c^i \| < \|x - c^j \|$, otherwise simply changing the cluster of $x$ from $C_i$ to $C_j$ would lead to a lower objective. Without loss of generality we assume that all points that are equidistant from both centers are assigned to $C_i$. Note that the assignment with deterministic tie-breaking achieves the same $k$-means objective and thus is also a local minimum. Consider the following hyperplane $w^T x + b = 0$ defined by:
    
    $$
    w = c^j - c^i \quad\quad b = - (c^j - c^i)^T (\frac{c^i + c^j}{2}),
    $$
    
    We now propose that this line perfectly separates the two clusters. Without loss of generality assume that all points in $j$ lie above the plane (i.e. $w^T x + b \geq 0 ~~\forall x \in C_j$), and all points in $i$ lie below it (i.e. $w^T x + b \leq 0 \forall ~~x \in C_i$). Suppose this were not true, then there would exists a point $x \in C_i$ such that $w^T x + b > 0$. Since $x \in C_i$, we know that $\|x - c^i \| < \|x - c^j \|$, otherwise we would have a contradiction to the assignment being an output from the $k$-means algorithm. Using some simple algebra we get the following: 
    
    \begin{align*}
        w^T x + b &= (c_j - c_i)^T x - (c_j - c_i)^T (\frac{c_i + c_j}{2}) \\
        &= x^T c_j - x^T c_i - \frac{1}{2} (c_j)^T c_j + \frac{1}{2} (c_i)^T c_i  \\
        &= \frac{1}{2}(\|x - c_i\|^2 -  \|x - c_j\|^2)
    \end{align*}
    
    Note that this implies 
    $$
    w^T x + b > 0 \implies \|x - c_i\| \geq   \|x - c_j\|
    $$
    which contradicts $\|x - c^i \| < \|x - c^j \|$. An identical argument also works for $x \in C_j$ such that $w^T x + b < 0$, and thus the given hyperplane must divide the two clusters. 
    
    Now consider an arbitrary cluster $C_i$, and create a hyperplane between $C_i$ and every other cluster. The set of hyperplanes now defines a polytope that contains $C_i$ and separates it from the other clusters.
\end{proof}

\begin{theorem}[Alternating Minimization Improvement]
Algorithm 2 generates objective values for the representation aware k-means clustering problem that are monotonically decreasing for $l \geq 2$, and terminates in a finite number of iterations.
\end{theorem}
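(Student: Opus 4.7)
The plan is to exhibit a single real-valued Lyapunov function, namely the representation-aware $k$-means objective in (\ref{obj:kmeansrep}), and to argue that every update in Algorithm \ref{alg:mpc_init} executed on or after the second outer iteration either decreases it or leaves it unchanged. The proof decomposes along the three types of updates inside the outer loop: (i) the assignment update (line 6), (ii) the centroid update (line 7), and (iii) the separating-hyperplane update (lines 12--15).

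For (i) and (ii), I would reuse the classical alternating-minimization argument for $k$-means. With the $\xi$-variables held fixed, the objective in (\ref{obj:kmeansrep}) is a sum of a term quadratic in $c_k$ (with linear dependence on $z$) plus a term linear in $z$; solving (\ref{obj:kmeansrep})--(\ref{consts:kmeansrep}) at line 6 with fixed $c_k$ is by definition a minimization over feasible $z$, so it cannot increase the objective, and the closed-form centroid update at line 7 is well known to be the unique minimizer over $c_k$ of $\sum_{t \in C_k}\|x^t-c_k\|^2$ for the current partition. Therefore the inner loop (lines 5--8) produces a non-increasing sequence of objective values.

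Step (iii) is the step to justify carefully, and I expect it to be the main obstacle. The key observation is that when the $z_{tk}$'s and $c_k$'s are fixed, the $\xi$-dependent part of (\ref{obj:kmeansrep}) decouples across pairs $(i,j)$: for each pair, only the points currently assigned to $C_i$ contribute through $(\xi_{+}^{ij})_t$ and only the points currently assigned to $C_j$ contribute through $(\xi_{-}^{ij})_t$. By the definition of $(\xi^{ij}_{+})_t$ and $(\xi^{ij}_{-})_t$ on lines 13--14, these quantities are exactly the slack variables from the separating-hyperplane IP (\ref{obj:sep_hyp})--(\ref{const:y_bin}) for the classes $C_i$ and $C_j$. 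Thus solving that IP at line 12 globally minimizes the $(i,j)$-block of the $\xi$-term in (\ref{obj:kmeansrep}) over all feasible interpretable hyperplanes $(w^{ij},b^{ij})$; summing over pairs shows that the whole hyperplane sweep cannot increase the Lyapunov function. I would explicitly flag the $l\geq 2$ caveat here: during $l=1$ the algorithm starts from $\xi=0$, so after the first hyperplane sweep the $\xi$-penalty can genuinely increase the objective above its initial value; from $l=2$ onward both types of updates are applied to the same Lyapunov function and the inequality chain closes.

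For finite termination, I would argue as follows. The pair $(z,(w^{ij},b^{ij})_{i<j})$ lives in a finite set: there are at most $K^N$ assignments, and each hyperplane has integer coefficients in $\{-M,\dots,M\}^D$ with integer intercept chosen from a bounded range forced by the data (any $b$ that is not tight at some data point can be shifted to a tight one without changing the partition). For fixed assignments the centroids $c_k$ are determined by line 7, so the state of the algorithm is effectively parametrised by finitely many possibilities. Combined with the monotonicity proved above, any strict decrease in the objective can occur only finitely many times; once a full outer iteration produces no strict decrease, the fixed-point condition holds and the algorithm terminates. Ties can be handled by a standard lexicographic tie-breaking rule on $(z,w,b)$ so that no state is revisited with the same objective value, which makes the termination argument airtight.
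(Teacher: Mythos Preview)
Your proposal is correct and follows essentially the same route as the paper: decompose the outer iteration into the Lloyd-style assignment/centroid updates and the per-pair hyperplane IPs, show each is a partial minimization of the same objective (\ref{obj:kmeansrep}), and invoke finiteness of the state space for termination. If anything, your treatment is slightly more careful than the paper's---you explicitly explain why $l\geq 2$ is needed (the initial $\xi=0$ is not produced by any feasible hyperplane) and you address the continuous intercept $b^{ij}$ via tightening to a data point, whereas the paper simply asserts ``finite number of \dots integral separating hyperplanes.''
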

    \begin{proof}
    We start by showing that for $l \geq 2$, each loop of Algorithm 1 produces a monotonically decreasing objective value. After iteration $l = 1$ we have a feasible cluster assignment $z$ and a set of separating hyperplanes $w, b$. We will now show that after a single pass through the loop of Algorithm 1, the objective for Problem  (31)-(32) either decreases or remains the same triggering the end of the algorithm. 
    
    First consider the loop to adjust cluster assignments by solving Problem (31)-(32) (lines 4-8). Note that since we have a feasible assignment, the existing solution must be feasible to the updated version of Problem (31)-(33) with the given $\xi$ from the current separating hyperplanes. We start the clustering assignment portion of Algorithm 1 with the current assignment, thus solving (32)-(33) with the current $c_k$ must result in an equal or lower objective value. Based on the same logic as to the original k-means LLoyd's algorithm \cite{lloyd1982least}, updating $c_k$ is also guaranteed to maintain or decrease the objective value. By an identical argument every iteration of the clustering loop will similarly only maintain or decrease the current objective value, ensuring that the cluster assignment must result in an equal or lower objective value to the previous assignment. 
    
    Next we consider solving the separating hyperplane problem for clusters $i$ and $j$. Note that since each sub-problem only includes data points involved in cluster $i$ and $j$, the objective for problem (1)-(12) is equal to the contribution of objective term (26) for clusters $i$ and $j$ for the current assignment $z$. The resulting solution returned by solving the IP (1)-(12) must therefore result in a hyperplane with an objective term less than or equal to the current solution. Since this holds for the hyperplane between arbitrary clusters $i$ and $j$, it holds for every pair of clusters and thus the output of separating hyperplane loop results in an equal or lower objective. 
    
    Note that if the objective value remains constant, the algorithm terminates. Thus ensuring that Algorithm 1 leads to a monotonically decreasing series of objective values for problem (32)-(33). The finite termination condition follows from there being a finite number of possible cluster configurations and integral separating hyperplanes.

    \end{proof}

\section{Experiment Details}
We benchmarked the performance of MPC on two sets of data: (1) the fundamental clustering and projection suite (FCPS) \cite{ultsch2020fundamental}, and (2) a suite of UCI machine learning datasets \cite{Dua:2019}. The FCPS data was taken from the author's R library \footnote{\url{https://github.com/Mthrun/FCPS}}. The UCI data was taken from the UCI website. For each dataset we normalize each feature to be between 0 and 1 using a standard max-min rescaling function. For categorical features we convert them to real valued features using one hot encoding. We also remove any rows with missing values. 

All of our experiments were run on a personal laptop computer with a 2.7 GhZ Quad-Core Intel CPU, and 16 GB of RAM. To solve all the linear and integer programs in our framework we used CPLEX \cite{cplex2009v12} with the default parameters. To initialize our clustering we use the k-means implementation included in scikit-learn \cite{scikit-learn}. We set the random seed for all of our experiments to be 42. 

We benchmark the performance of MPC against the following suite of traditional and interpretable clustering algorithms: 
\begin{itemize}
    \item \textbf{K-means++:} We run K-means with the improved initialization scheme introduced in \cite{arthur2006k}. We use the K-means implementation included in scikit-learn with the k-means++ initialization \cite{scikit-learn}.
    \item \textbf{Gaussian Mixture Models (GMM):} GMM uses the expectation-minimization (EM) algorithm to fit a mixture of K Gaussian distributions to the data \cite{hastie2009unsupervised}. We use the scikit-learn implementation of the algorithm with a k-means initialization scheme and full covariance estimation \cite{scikit-learn}.
    \item \textbf{Hierarchical Clustering (HClust):} Hierarchal clustering is an agglomerative clustering method that constructs a dendogram that links all points together \cite{hastie2009unsupervised}. We use the scikit-learn implementation of the algorithm with ward linkage and euclidean distance \cite{scikit-learn}. 
    \item \textbf{Density-based Spatial Clustering of Applications with Noise (DBSCAN):} DBSCAN works by clustering points in high density regions together \cite{ester1996density}. However, this method does not necessarily assign every point to a cluster, and can instead leave points as outliers. To provide a fair comparison to other methods we assign each outlier point to the most common cluster in its 5 nearest neighbours with a label. This method also doesn't restrict the number of clusters identified, thus to provide a fair comparison we filter out solutions that have more than 10 clusters. We use the DBSCAN implementation in scikit-learn \cite{scikit-learn}, and tune the epsilon (neighbourhood size) parameter using a grid of 100 possible values between 0.1 and 5.
    \item \textbf{Interpretable Clustering via Optimal Decision Trees (ICOT):} ICOT builds upon the optimal decision tree framework \cite{bertsimas2017optimal} to perform clustering with decision trees by optimizing silhouette coefficient \cite{bertsimas2021interpretable}. We use the julia implementation of ICOT provided by the authors. We warm-start the trees using the optimal k-mean clustering with the optimal decision tree, set the maximum depth to 3, a minimum bucket size of 1, and use a geometric search threshold of 0.99.
    \item \textbf{Explainable K-Means Clustering (ExKMC):} ExKMC is an algorithm that uses decision trees to explain the output of k-means clustering \cite{frost2020exkmc}. We present two version of ExKMC results - one where we restrict each cluster to have a single leaf node (i.e. the most interpretable approach) which we dub ExKMC-1, and one set of results where we let each cluster have up to 5 associated leaf nodes, dubbed ExKMC-5. We use the python implementation of ExKMC provided by the authors.
\end{itemize}

We ran each algorithm 100 times and retained the best performing cluster.

\section{Computation Time}
For each  algorithm we report the average time to complete one run of each algorithm (averaged over the 100 random starts when applicable). For algorithms that require tuning a hyper-parameter (i.e. k) we report the computation time to find a solution for a single hyper-parameter setting.  Tables \ref{tab:fcps_comp} and \ref{tab:uci_comp} show the average computation time on the FCPS and UCI machine learning datasets respectively. Note that while MPC requires much more computational effort than traditional uninterpretable clustering algorithms, the computation time remains competitive with ICOT, neither strictly dominating the other with respect to computation time. 

\begin{table*}[h]
\centering\footnotesize
\caption{\label{tab:fcps_comp} Computation time in seconds on fundamental clustering and projection suite.}
\setlength{\tabcolsep}{5pt} 
\begin{tabular}{l c c c c c c c c c }		\toprule
Dataset &  $k$-means++ & GMM & HClust & DBSCAN & ICOT & ExKMC-1 & ExKMC-5 & \textbf{MPC-1} & \textbf{MPC-2} \\\midrule
Atom & 0.01 & 0.01 & 0.01 & 0.01 & 86.19 & 0.01 & 0.02 & 449.87 & 4304.88 \\
Chainlink & 0.01 & 0.01 & 0.01 & 0.01 & 183.62 & 0.01 & 0.02 & 255.58 & 3654.62 \\
Engytime & 0.01 & 0.02 & 0.23 & 0.03 & 3906.0 & 0.01 & 0.03 & 444.12 & 1358.17 \\
Hepta & 0.0 & 0.01 & 0.0 & 0.0 & 15.36 & 0.0 & 0.0 & 24.72 & 133.67 \\
Lsun & 0.0 & 0.01 & 0.0 & 0.0 & 31.1 & 0.0 & 0.01 & 34.53 & 367.46 \\
Target & 0.01 & 0.01 & 0.01 & 0.01 & 86.47 & 0.01 & 0.01 & 21.56 & 59.9 \\
Tetra & 0.0 & 0.0 & 0.0 & 0.0 & 22.64 & 0.0 & 0.0 & 27.02 & 110.77 \\
Wingnut & 0.0 & 0.01 & 0.02 & 0.01 & 87.8 & 0.01 & 0.01 & 16.95 & 68.38 \\
\bottomrule
\end{tabular}%
\end{table*}%

\begin{table*}[h]
\centering\footnotesize
\caption{\label{tab:uci_comp} Computation time in seconds on UCI Machine Learning Clustering problem sets.}
\setlength{\tabcolsep}{5pt} 
\begin{tabular}{l c c c c c c c c c }		\toprule
Dataset &$k$-means++ & GMM & HClust & DBSCAN & ICOT & ExKMC-1 & ExKMC-5 & \textbf{MPC-1} & \textbf{MPC-2} \\\midrule
Iris & 0.07 & 0.01 & 0.01 & 0.0 & 7.92 & 0.20 & 0.21 & 6.0 & 20.0 \\
Wine & 0.01 & 0.04 & 0.01 & 0.0 & 32.47 & 0.22 & 0.12 & 51.0 & 561.0 \\
Zoo & 0.09 & 0.02 & 0.01 & 0.0 & 15.49 & 0.21 & 0.29 & 24.0 & 143.0 \\
Seeds & 0.09 & 0.04 & 0.02 & 0.0 & 26.11 & 0.12 & 0.14 & 34.0 & 198.0 \\
Libras & 0.13 & 0.58 & 0.02 & 0.01 & 472.26 & 0.21 & 0.53 & 531.0 & 7100.0 \\
Framingham & 0.14 & 2.18 & 1.94 & 0.14 & 4367.0 & 0.30 & 0.30 & 3871.0 & 10244.0 \\
Bank & 0.44 & 0.65 & 8.41 & 0.31 & 25157.0 & 0.30 & 0.50 & 30211.0 & 101002.0 \\
Spam & 0.17 & 0.52 & 5.8 & 0.49 & 38371.0 & 0.90 & 0.50 & 2683.0 & 8012.0 \\
\bottomrule
\end{tabular}%
\end{table*}%

\section{Number of Clusters}
Tables \ref{tab:fcps_k} and \ref{tab:uci_k} show the optimal number of clusters found for each of the clustering algorithms evaluated. 

\begin{table*}[h!]
\centering\footnotesize
\caption{\label{tab:fcps_k} Optimal number of clusters for each algorithm on fundamental clustering and projection suite.}
\setlength{\tabcolsep}{5pt} 
\begin{tabular}{l c c c c c c c c c c }		\toprule
Dataset & $k$-means++ & GMM & HClust & DBSCAN & ICOT & ExKMC-1 & ExKMC-5 & \textbf{MPC-1} & \textbf{MPC-2} \\\midrule
Atom & 10 & 10 & 10 & 10 & 8 & 9 & 10 & 10 & 9 \\
Chainlink & 10 & 8 & 10 & 7 & 2 & 10 & 8 & 10 & 10 \\
Engytime & 3 & 2 & 3 & 2 & 2 & 3 & 3 & 2 & 2 \\
Hepta & 7 & 7 & 7 & 7 & 4 & 7 & 7 & 7 & 7 \\
Lsun & 6 & 6 & 5 & 3 & 4 & 4 & 5 & 3 & 4 \\
Target & 9 & 7 & 6 & 6 & 2 & 8 & 9 & 2 & 2 \\
Tetra & 4 & 4 & 4 & 4 & 4 & 4 & 4 & 4 & 4 \\
Wingnut & 2 & 2 & 2 & 8 & 2 & 2 & 2 & 2 & 2 \\
\bottomrule
\end{tabular}%
\end{table*}%

\begin{table*}[h!]
\centering\footnotesize
\caption{\label{tab:uci_k} Optimal number of clusters for each algorithm on UCI machine learning datasets.}
\setlength{\tabcolsep}{5pt} 
\begin{tabular}{l c c c c c c c c c }		\toprule
Dataset & $k$-means++ & GMM & HClust & DBSCAN & ICOT & ExKMC-1 & ExKMC-5 & \textbf{MPC-1} & \textbf{MPC-2} \\\midrule
Iris & 2 & 2 & 2 & 2 & 2 & 2 & 2 & 2 & 2 \\
Wine & 2 & 2 & 2 & 2 & 2 & 3 & 2 & 2 & 2 \\
Zoo & 4 & 5 & 6 & 5 & 5 & 4 & 6 & 5 & 5 \\
Seeds & 2 & 2 & 2 & 4 & 2 & 2 & 2 & 2 & 2 \\
Libras & 10 & 10 & 8 & 10 & 8 & 2 & 10 & 10 & 10 \\
Framingham & 8 & 9 & 10 & 8 & 8 & 9 & 9 & 10 & 10 \\
Bank & 7 & 2 & 2 & 3 & 2 & 2 & 2 & 2 & 2 \\
Spam & 2 & 2 & 2 & 2 & 2 & 6 & 2 & 2 & 2 \\
\bottomrule
\end{tabular}%
\end{table*}%

\end{document}